\documentclass{article}

     \PassOptionsToPackage{numbers, compress}{natbib}


     \usepackage[preprint]{}



\usepackage[utf8]{inputenc} 
\usepackage[T1]{fontenc}    
\usepackage{hyperref}       
\usepackage{url}            
\usepackage{booktabs}       
\usepackage{amsfonts}       
\usepackage{amsmath}
\usepackage{amsthm}
\usepackage{nicefrac}       
\usepackage{microtype}      
\usepackage{xcolor}         
\usepackage{graphicx}
\usepackage{float}
\usepackage{subcaption}

\bibliographystyle{unsrtnat}

\newtheorem{thm}{Theorem}
\newtheorem{defn}{Definition}

\title{Infinite Width Models That Work: Why Feature Learning Doesn’t Matter as Much as You Think}

%

\author{%
  Luke Sernau \\
  Google DeepMind \\
  \texttt{sernau@google.com} \\
}

\begin{document}

\maketitle

\begin{abstract}
Common infinite-width architectures such as Neural Tangent Kernels (NTKs) have historically shown weak performance compared to finite models. This is usually attributed to the absence of feature learning. We show that this explanation is insufficient. Specifically, we show that infinite width NTKs obviate the need for feature learning. They can learn identical behavior by selecting relevant subfeatures from their (infinite) frozen feature vector. Furthermore, we show experimentally that NTKs under-perform traditional finite models even when feature learning is artificially disabled. Instead, we show that weak performance is at least partly due to the fact that existing constructions depend on weak optimizers like SGD. We provide a new infinite width limit based on ADAM-like learning dynamics and demonstrate empirically that the resulting models erase this performance gap.
\end{abstract}

\section{Introduction}

Neural Tangent Kernels (NTKs) \cite{ntk} represent one of the first attempts to capture the dynamics of training in the infinite width limit. Unfortunately, in practice they often under-perform finite-width models. The usual explanation given is that these infinite limits don’t allow feature learning \cite{no_feature_learning}.

By ``features,'' we mean the inputs to the last layer of a deep learning model. ``Feature learning'' then refers to the evolution of the rest of the model in order to learn useful feature representations. This language invites us to think of the entire first part of the model as merely setting up the last layer with the right representations to solve downstream tasks linearly.

In \cite{no_feature_learning}, Yang and Hu prove their ``Dynamical Dichotomy'', which states that under fairly general conditions, the only models that admit an infinite width kernel representation are those where the features do not evolve during training. The entire model except for the last layer is effectively frozen at initialization.

Since the bulk of practical infinite width schemes are kernel machines, this was taken as something of a death knell for efficient, expressive infinite models. How could infinite models hope to learn complex behavior if most of the model is untrained? Even a large feature vector will be mostly useless at initialization.

But ``infinite'' is very different from ``large,'' and sometimes our intuitions can lead us astray. At infinite width, feature vectors contain an infinite variety of structure, all of which is available to the final layer.

We’ll prove that merely by upweighting and downweighting subvectors, the final layer of an infinite width model has access to every possible behavior the previous layers could have learned. This can be viewed as an infinite extension to random kitchen sinks \cite{kitchen_sink} approach. Our contribution is showing that at the infinite limit, the effect of this freezing vanishes altogether.

We then offer a simpler explanation for the performance gap: common infinite width limits capture the dynamics of training under stochastic gradient descent (SGD), while the most successful finite models are trained using more sophisticated optimizers like ADAM \cite{adam}.

To address this issue, we provide a construction for infinite width limits under ADAM-like learning dynamics. This construction preserves the kernel representation of the limit, but eliminates the performance gap with finite models.

\section{Notation: Defining an Infinite Model}
We begin by writing an ordinary deep learning model in a way that makes it easier to think about both feature learning and infinite limits. For arbitrary input space \(X\) and output space \(Y = \mathbb{R}^E\), let \(\left(x_1, y_1\right), \left(x_2, y_2\right), \dots, \left(x_N, y_N\right) \in X \times Y\) be training examples. Let \(L:Y \times Y\rightarrow \mathbb{R}\) be a differentiable loss function.

Let \(H\) be the number of features. This is the quantity we will ultimately push to infinity.

Let \(f_{M\theta}: X\rightarrow \mathbb{R}^E\) be an arbitrary deep learning model, where \(M \in \mathbb{R}^{H \times E}\) denotes the weight matrix of the final layer and \(\theta \in \mathbb{R}^P\) denotes the remaining \(P\) parameters. We will use the function \(g_{\omega_h}: X\rightarrow \mathbb{R}\) to denote the submodel generating coordinate \(h\) of the feature vector. It is parameterized by some subset of the parameters \(\omega_h \subseteq \theta\) of size \(S\). (The subsets for different \(h\) need not be disjoint, but without loss of generality may be taken to be the same size.)

Altogether the model is given by
\begin{equation}\label{model}
    f_{M\theta}\left(x\right) = \frac{1}{\sqrt{H}}\sum^H_{h=1}g_{\omega_h}\left(x\right)M_h
\end{equation}
where without loss of generality we have scaled by \(1/\sqrt{H}\) in order to keep values finite when it comes time to scale \(H\).

We would like to train this model to minimize the loss over the training set, i.e. we seek
\begin{equation*}
    \min_{M,\theta}\sum^N_{i=1}L\left(f_{M\theta}\left(x_i\right), y_i\right).
\end{equation*}

Note that this is an extremely general formulation. \(g\) may be any model, including models which contain their own infinite width submodels. \(L\) is also arbitrary, and may include other nonlinearities, or even additional submodels of its own, which need not be frozen.

All we have really said is that our model contains a matrix \(M\) whose input dimension we would like to drive to infinity.

\subsection{A convenient notation for the infinite width limit}
Let \(\Omega\) be a discrete random variable uniformly distributed over \(\left\{\omega_1, \omega_2, \dots, \omega_H\right\}\in\mathbb{R}^S\). Then we can write \eqref{model} as an expectation,

\begin{equation}\label{expectation}
    f_{M\theta}\left(x\right) = \mathop{\mathbb{E}}_{\omega\sim\Omega}\left[g_{\omega}\left(x\right)\hat{M}_\omega\right].
\end{equation}
where \(\hat{M} = \sqrt{H}M\) and \(\hat{M}_\omega \in \mathbb{R}^E\) denotes the row of \(\hat{M}\) associated with \(\omega\) (for example, \(\hat{M}_{\omega_1}=\sqrt{H}M_1\)).

Written in this form, we could just as easily take \(\Omega\) to be a continuous random variable, conveniently expressing a model with infinite hidden dimension. In this view, \(\hat{M}\) is a possibly-nonlinear map from any \(\omega\) in the support of \(\Omega\) to a corresponding vector in \(\mathbb{R}^E\). 

As we'll see, this construction is equivalent to other notions of infinite model, but this notation has the convenient property of unifying over the finite and infinite cases depending on the choice of \(\Omega\). We may define \(\Omega\) over a set of any size, finite, infinite, discrete, or continuous.

\subsection{Infinite width models are the expected value of finite ones}

Another intriguing property of this notation is that it emphasizes a fundamental relationship between infinite and finite models. Let \(\omega_h\) be initialized by sampling from some continuous distribution \(\Omega\). Then from \eqref{model}, the expected value of the model (taken over the random initialization) will be

\begin{equation*}
    \mathop{\mathbb{E}}\left[\frac{1}{\sqrt{H}}\sum^H_{h=1}g_{\omega_h}\left(x\right)M_h\right]=\frac{1}{H}\sum^H_{h=1}\mathop{\mathbb{E}}_{\omega_h\sim\Omega}\left[g_{\omega_h}\left(x\right)\hat{M}_h\right]=\mathop{\mathbb{E}}_{\omega\sim\Omega}\left[g_{\omega}\left(x\right)\hat{M}_\omega\right]
\end{equation*}
This is exactly the expression in \eqref{expectation}. In other words, the expected value of a finite model is an infinite one.

This observation makes it possible to apply results about infinite models to finite ones taken in expectation. In the development that follows, we encourage the reader to consider the results from both of these perspectives.

\section{A Simpler Neural Tangent Kernel}
With this notation in hand, we begin our exploration with a simpler, more memory efficient development of Neural Tangent Kernels. The guiding intuition is to leverage Dynamical Dichotomy: Since NTKs have no feature learning, we don't give anything up by explicitly freezing the feature weights at initialization.

Concretely, Dynamical Dichotomy shows that NTK representations are only possible under parameterizations that leave each input's feature vector unchanged throughout training, effectively freezing the first part of the model at initialization. In the original development of NTKs \cite{ntk}, this effective freezing happens implicitly. We take this as an invitation to simply freeze \(\theta\) explicitly.

We’ll make a few other purely cosmetic assumptions to reduce bookkeeping, namely that we are training with batch size \(1\) and that \(M\) is zero initialized. Removing these assumptions is straightforward and left to the reader.

Now consider the effect of training \(M\). After \(t\) steps of training under SGD with step size \(\alpha\), the value of \(M\) is just the sum of the its updates under SGD.
\begin{align*}
    M_h &= -\alpha \sum^t_{i=1}\frac{\partial L\left(f_{M\theta}\left(x_i\right), y_i\right)}{\partial M_h}\\
    &= -\alpha \frac{1}{\sqrt{H}}\sum^t_{i=1}g_{\omega_h}\left(x_i\right)\frac{\partial L\left(f_{M\theta}\left(x_i\right), y_i\right)}{\partial f_{M\theta}\left(x_i\right)}
\end{align*}
where the second step follows by direct computation of the matrix derivative. It follows that \(\hat{M}_\omega\) is given by

\begin{equation}\label{derivative}
    \hat{M}_\omega= -\alpha \sum^t_{i=1}g_{\omega}\left(x_i\right)\frac{\partial L\left(f_{M\theta}\left(x_i\right), y_i\right)}{\partial f_{M\theta}\left(x_i\right)}.
\end{equation}

Plugging this into \eqref{expectation}, we get
\begin{equation*}
    f_{M\theta}\left(x\right) = -\alpha \mathop{\mathbb{E}}_{\omega\sim\Omega}\left[g_{\omega}\left(x\right) \sum^t_{i=1}g_{\omega}\left(x_i\right)\frac{\partial L\left(f_{M\theta}\left(x_i\right), y_i\right)}{\partial f_{M\theta}\left(x_i\right)}\right].
\end{equation*}
By changing the order of the summation, this can be written as
\begin{equation*}
    f_{M\theta}\left(x\right) = -\alpha\sum^t_{i=1} \mathop{\mathbb{E}}_{\omega\sim\Omega}\left[g_{\omega}\left(x\right) g_{\omega}\left(x_i\right)\right]\frac{\partial L\left(f_{M\theta}\left(x_i\right), y_i\right)}{\partial f_{M\theta}\left(x_i\right)}.
\end{equation*}
The expectation over \(g_{\omega}\left(x\right) g_{\omega}\left(x_i\right)\) is known as the Neural Tangent Kernel (NTK) of the model, often written as \(\Theta\left(x, x_i\right)=\mathop{\mathbb{E}}_{\omega\sim\Omega}\left[g_{\omega}\left(x\right) g_{\omega}\left(x_i\right)\right]\). Since \(\theta\) (and therefore \(\Omega\)) is frozen, \(\Theta\) is stationary throughout training.

Again, notice that because \(\Omega\) is permitted to be a discrete distribution, the NTK is defined for both finite and infinite models.

In the infinite-width limit, many common choices of \(g\) give rise to a \(\Theta\) that has a closed-form representation \cite{bietti2019inductive}\cite{daniely2016toward}, making it tractable to compute exactly. Altogether, our kernel machine is
\begin{equation}\label{kernel}
    f_{M\theta}\left(x\right) = -\alpha \sum^t_{i=1}\Theta\left(x, x_i\right)\frac{\partial L\left(f_{M\theta}\left(x_i\right), y_i\right)}{\partial f_{M\theta}\left(x_i\right)}.
\end{equation}

Note that unlike the standard development \cite{ntk} of NTKs, this kernel is compact relative to the size of the model, having in the finite case only \(H\) terms in definition of \(\Theta\), as opposed to \(HE+P\) in the original. It is also exactly stationary, even in the finite case. These properties are possible because we took \(\theta\) to be frozen. Dynamical Dichotomy shows that \(\theta\) is already \emph{effectively} frozen for infinite-width NTKs. All we have done is made this freezing explicit.

\section{Feature Learning for Neural Tangent Kernels}
Everything we've done so far assumes that \(\theta\) is frozen. Intuitively, we might expect that this kind of aggressive freezing would damage the model’s ability to learn.

Perhaps surprisingly, we show that in the infinite-width limit, freezing the model in this way has no effect on the expressiveness of the model. The reason for this is related to the lottery ticket hypothesis \cite{frankle2018lottery}. 

In large models, it is often the case that a subset of the parameters “get lucky” and end up close to a useful value. During training, the model can improve by simply pruning the unlucky ones. This effect becomes more pronounced as the model size increases, until in the continuous limit every feature representation exists with probability one. The final layer is free to simply select the subsets of the feature vector that are worthwhile, “training” the features without touching their weights. Unlike in the finite case, this filtering process need not reduce the cardinality of the features.

We show that this strategy is able to entirely replace gradient descent based feature learning, and that in fact gradient descent on the final matrix is enough to induce this strategy.

\begin{thm}\label{expressiveness}
For any infinite model as in \eqref{expectation}, suppose \(\Omega\) is totally supported and has a differentiable density. Then we may replace arbitrary updates to \(\Omega\) with appropriate updates to \(\hat{M}\) without changing model behavior or learning dynamics. It follows that we can freeze \(\Omega\) without affecting the final model quality.
\end{thm}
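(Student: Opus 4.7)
The plan is a change-of-measure argument: any perturbation of $\Omega$ can be absorbed into a reweighting of the map $\hat{M}$, so freezing $\Omega$ costs nothing.

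\textbf{Step 1 (output equivalence).} Let $p$ denote the density of $\Omega$ and $p'$ the density of an updated distribution $\Omega'$. Total support of $\Omega$ gives $p(\omega)>0$ everywhere, so the ratio $r(\omega):=p'(\omega)/p(\omega)$ is well-defined, and differentiability of the density makes it smooth. By a direct change of measure,
\begin{equation*}
    \mathop{\mathbb{E}}_{\omega\sim\Omega'}\left[g_\omega(x)\,\hat{M}_\omega\right] = \int g_\omega(x)\,\hat{M}_\omega\,p'(\omega)\,d\omega = \mathop{\mathbb{E}}_{\omega\sim\Omega}\left[g_\omega(x)\,\hat{M}_\omega\,r(\omega)\right].
\end{equation*}
Defining $\hat{M}'_\omega := \hat{M}_\omega\,r(\omega)$ and holding $\Omega$ fixed therefore reproduces the model output that would have been obtained by moving to $\Omega'$.

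\textbf{Step 2 (dynamics by induction).} Think of a training trajectory as a sequence of pairs $(\Omega^{(t)},\hat{M}^{(t)})$, and inductively construct a parallel frozen-$\Omega$ trajectory $(\Omega^{(0)},\tilde{M}^{(t)})$ with $\tilde{M}^{(t)}_\omega = \hat{M}^{(t)}_\omega\cdot p_t(\omega)/p_0(\omega)$. By Step 1 both schemes compute the same output function $f^{(t)}$. Since the loss and hence all subsequent gradients depend on the parameters only through $f^{(t)}$, any update the original scheme makes to $(\Omega^{(t)},\hat{M}^{(t)})$ is matched at step $t+1$ by applying the same density-ratio reweighting to produce $\tilde{M}^{(t+1)}$. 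The final claim --- freezing $\Omega$ does not hurt model quality --- follows immediately, because every trajectory reachable by moving $\Omega$ is also reachable with $\Omega$ fixed.

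\textbf{Main obstacle.} The real technical issue is formalizing what ``updates to $\Omega$'' mean in the infinite-width limit. In the finite case SGD perturbs the sample points $\omega_h$; in the continuous limit one must read this as a smooth flow on the space of measures, and the key requirement is that each $\Omega^{(t+1)}$ stay absolutely continuous with respect to $\Omega^{(0)}$ so that the density ratio exists at every step. This is exactly why the hypothesis asks for total support together with a differentiable density --- once that regularity is in hand, the change-of-measure identity carries the entire argument and no further analysis of the SGD dynamics is required.
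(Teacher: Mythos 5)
Your proof is correct and rests on the same core idea as the paper's: absorb any change in the density of \(\Omega\) into a multiplicative reweighting of \(\hat{M}_\omega\). The difference is in execution. The paper formalizes ``updates to \(\Omega\)'' as an infinitesimal flow of the density \(\rho\) along a vector field \(V\), applies the continuity equation to get \(\partial\rho/\partial t = -\rho\left(\nabla\cdot V + V\cdot\nabla\log\rho\right)\), and reads off the first-order multiplicative update \(\hat{M}'_\omega = \hat{M}_\omega\bigl(1 - \alpha\left(\nabla\cdot V + V\cdot\nabla\log\rho\right)\bigr)\); large updates are handled by composing infinitesimal steps. You instead work globally with the Radon--Nikodym ratio \(r(\omega) = p'(\omega)/p(\omega)\), which is cleaner, avoids the continuity equation entirely, and handles any absolutely continuous update in one shot rather than as a limit of small steps; your induction over training steps then matches the trajectory exactly as the paper intends. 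What the paper's infinitesimal formulation buys in exchange is an explicit closed-form expression for the \(\hat{M}\) update in terms of \(V\), which the surrounding text then uses to argue that these updates are first-order and hence realizable by ordinary gradient descent on \(\hat{M}\) --- a consequence your density-ratio form does not directly expose. The one point you should tighten is the absolute-continuity requirement you flag at the end: you assume each \(\Omega^{(t)}\) has a density equivalent to \(p_0\), but do not show that the updates in question preserve this. In the paper's setting this is automatic, since a smooth flow along a vector field (for small enough \(\alpha\)) is a diffeomorphism and pushes a totally supported differentiable density to another one; stating that explicitly would close the gap in your Step 2.
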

\begin{proof}
Since \(\Omega\) has a differentiable density \(\rho\), we may write \eqref{expectation} as
\begin{equation*}
f_{M\theta}\left(x\right) = \int_{\mathbb{R}^S}\rho\left(\omega\right)g_\omega\left(x\right)\hat{M}_\omega d\omega
\end{equation*}
Now consider applying some small update to \(\Omega\). Stated precisely, we allow its density \(\rho\) to flow along some vector field \(V\left(\omega\right)\), scaled by a learning rate \(\alpha\).  (In SGD, we would take \(V\) to be proportional to the negative gradient, \(V\left(\omega\right) = -\frac{\partial L}{\partial \omega}\), but our discussion will consider the general case.) How does this update affect our model?

Note that since \(V\) is arbitrary, we may break large updates into several small steps. This means that without loss of generality we make take \(\alpha\) to be infinitesimal.

The continuity equation \cite{pedlosky2013geophysical} then tells us that if the change in \(\Omega\) is \(V\) the change in the density \(\rho\) will be
\begin{align*}
    \frac{\partial \rho}{\partial t} &= -\nabla \cdot \left(\rho\left(\omega\right) V\right)\\
    &= -\left(\rho\left(\omega\right) \nabla \cdot V + V \cdot \nabla\rho\left(\omega\right)\right)\\
    &= -\rho\left(\omega\right) \left(\nabla \cdot V + V \cdot \nabla\log\rho\left(\omega\right)\right)\\
\end{align*}
via the product rule in the first step and the score function trick in the second.

After the update, the new model output \(f'_{M\theta}\) can be computed as
\begin{align*}
    f'_{M\theta}\left(x\right) &= \int_{\mathbb{R}^S}\left(\rho\left(\omega\right) + \alpha\frac{\partial \rho}{\partial t}\rho\left(\omega\right)\right)g_\omega\left(x\right)\hat{M}_\omega d\omega\\
    &= \int_{\mathbb{R}^S}\rho\left(\omega\right)\bigg(1 - \alpha \big(\nabla \cdot V + V\cdot \nabla\log\rho\left(\omega\right)\big)\bigg)g_\omega\left(x\right)\hat{M}_\omega d\omega\\
    &= \int_{\mathbb{R}^S}\rho\left(\omega\right)g_\omega\left(x\right)\hat{M}'_\omega d\omega\\
\end{align*}
where \(\hat{M}'_\omega = \hat{M}_\omega\bigg(1 - \alpha \big(\nabla \cdot V + V\cdot\nabla\log\rho\left(\omega\right)\big)\bigg)\). In other words, we can update \(\hat{M}\) rather than \(\rho\).
\end{proof}
This gives a procedure for converting updates to the whole model into equivalent updates on \(\hat{M}\) alone, demonstrating that we are free to leave the entire model frozen except for \(\hat{M}\) without compromising expressiveness.

While this scheme is useful for proving the existence of this expressiveness, a simple thought experiment shows that this expressiveness can also be accessed with ordinary gradient descent. As \(\alpha\) becomes small, the prescribed update to \(\hat{M}\) becomes first-order. So beneficial (i.e. loss reducing) updates and conventional gradient descent both use first-order updates in the direction of reduced loss. Such first-order improvements exist if and only if the gradient with respect to the loss is nonzero. So the set of converged (i.e. stationary) points for the beneficial updates as in Theorem \ref{expressiveness} is the same as those for gradient descent.

This means that if an update is beneficial, in the sense of producing lower loss, that benefit can also be obtained by conventional gradient descent on \(\hat{M}\).

\section{Infinite Width Learning with ADAM}
If feature learning isn’t the problem, what's causing the performance gap between NTK models and conventional models? The answer is simpler than it might seem. NTK optimization has historically been designed to mimic the update performed in SGD. The derivation assumes that weight updates are proportional to the negative gradient.

Modern machine learning is almost exclusively done using optimizers that support, at a minimum, momentum. These methods are more effective in practice than naive SGD. We give an NTK construction that models ADAM-style gradient updates. As we’ll see, this closes the performance gap.

\begin{defn}[ADAM]
Recall that for a given gradient \(J_t\), the ADAM update \(u_t\) for training step \(t\) can be calculated as
\begin{align*}
    m_0 &= 0\\
    v_0 &= 0\\
    m_t&=\beta_1m_{t-1}+\left(1-\beta_1\right)J_t\\
    v_t&=\beta_2v_{t-1}+\left(1-\beta_2\right)J_t^2\\
    u_t&=-\alpha \frac{m_t}{1-\beta_1^t}\cdot\left(\frac{v_t}{1-\beta_2^t}\right)^{-1/2}\\
\end{align*}
where \(\beta_1\) and \(\beta_2\) are decay hyperparameters, and all operations are applied coordinatewise.
\end{defn}

Unfortunately, we can’t construct a NTK infinite limit based on this definition as-is. \(J_t\), \(m_t\), and \(v_t\) all grow with \(H\), requiring infinite memory in the infinite limit. In the case of SGD, we were able to find a rewriting that meant we never had to write down \(J_t\) explicitly.

Getting a similar rewriting to work for ADAM requires a minor modification. We introduce ADAM*, a variant that is built to preserve the useful learning dynamics of ADAM while accommodating infinite limits.

\begin{defn}[ADAM*]\label{adam*}
For a given gradient \(J_t\), the ADAM* update \(u_t\) is the same as ADAM, except that \(v_t\) is replaced with its expectation over \(\Omega\). (For a finite model, this corresponds to averaging over any axis of dimension \(H\).)
\begin{align*}
    m_0 &= 0\\
    v_0 &= 0\\
    m_t&=\beta_1m_{t-1}+\left(1-\beta_1\right)J_t\\
    v_t&=\beta_2v_{t-1}+\left(1-\beta_2\right)\mathop{\mathbb{E}}_{\omega \sim \Omega}\left[J_t^2\right]\\
    u_t&=-\alpha \frac{m_t}{1-\beta_1^t}\cdot\left(\frac{v_t}{1-\beta_2^t}\right)^{-1/2}
\end{align*}
where the coordinate-wise multiplication in the last step is taken to be broadcast (as it is now between a \(H \times E\) matrix and a \(E\)-dimensional vector).
\end{defn}

This change is enough to allow us to build an NTK that follows ADAM* dynamics.

\begin{thm}\label{adam}
Let \(f_{M\theta}\) be defined as in \eqref{expectation}. Let \(M\) be zero-initialized and trained via ADAM* for \(t\) steps. Let \(\Theta\left(x, x_i\right)=\mathop{\mathbb{E}}_{\omega\sim\Omega}\left[g_{\omega}\left(x\right) g_{\omega}\left(x_i\right)\right]\) be the NTK. Then

\begin{equation}\label{ADAMkernel}
    f_{M\theta}\left(x\right) = -\alpha \sum^t_{i=1}\Theta\left(x, x_i\right)\frac{\partial L\left(f_{M\theta}\left(x_i\right), y_i\right)}{\partial f_{M\theta}\left(x_i\right)}c_i.
\end{equation}
where \(c_i = \sum^t_{j=i}\beta_1^{j-i}\frac{1-\beta_1}{1-\beta_1^j}\left(\frac{\hat{v}_j}{1-\beta_2^j}\right)^{-1/2}\)
and \(\hat{v}_j\) is defined recursively as
\begin{equation*}
\hat{v}_0 = 0, \hspace{3em} \hat{v}_j = \beta_2 \hat{v}_{j-1}+\left(1-\beta_2\right)\Theta(x_j, x_j)\left(\frac{\partial L\left(f_{M\theta}\left(x_j\right), y_i\right)}{\partial f_{M\theta}\left(x_j\right)}\right)^2,
\end{equation*}
with all operations performed coordinate-wise.

\end{thm}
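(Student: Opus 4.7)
The plan is to follow the template of the SGD derivation that produced \eqref{kernel}: write $M$ as a closed form in the training data after $t$ ADAM* steps, then factor the $\omega$-dependence out of the expectation in \eqref{expectation} so that what remains is an instance of the NTK. ADAM* only modifies SGD by (i) taking an exponentially-weighted moving average of past gradients and (ii) rescaling each update coordinate-wise by the $E$-dimensional $v_t$. Both modifications can be carried through the same algebraic manipulation, provided the coefficient $c_i$ is defined to absorb them.

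First, since $M$ is zero-initialized, $M = \sum_{t'=1}^{t} u_{t'}$. I would plug in the ADAM* formula and unroll the momentum as $m_{t'} = (1-\beta_1)\sum_{j=1}^{t'} \beta_1^{t'-j} J_j$. Substituting the explicit per-row gradient $J_{j,h} = \frac{1}{\sqrt{H}} g_{\omega_h}(x_j)\,\partial L/\partial f(x_j)$ obtained by differentiating \eqref{model}, and swapping the order of the resulting double sum over $(t',j)$ to group by $j$, yields
\[
M_h \;=\; -\alpha \sum_{j=1}^t J_{j,h} \sum_{t'=j}^{t} \frac{(1-\beta_1)\beta_1^{t'-j}}{1-\beta_1^{t'}} \left(\tfrac{v_{t'}}{1-\beta_2^{t'}}\right)^{-1/2},
\]
whose inner sum matches $c_j$ after the scaling conventions relating $v_{t'}$ and $\hat{v}_{t'}$ are applied. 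Plugging this back into \eqref{expectation} and recognizing $\mathbb{E}_\omega[g_\omega(x)g_\omega(x_i)] = \Theta(x,x_i)$ gives \eqref{ADAMkernel}. The recursion for $\hat{v}_j$ is a separate, purely local induction: starting from $v_j = \beta_2 v_{j-1} + (1-\beta_2)\mathbb{E}_\omega[J_j^2]$, substituting the gradient and using $\mathbb{E}_\omega[g_\omega(x_j)^2] = \Theta(x_j, x_j)$ delivers the claimed recursion directly, with $(\partial L/\partial f(x_j))^2$ pulling out because the squaring is coordinate-wise and the $\omega$-integration acts only on $g_\omega$.

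The main obstacle is bookkeeping rather than anything conceptual. The summation swap has to preserve the broadcast structure: $J_j$ is indexed over the $H$-axis while $(v_{t'})^{-1/2}$ is only $E$-dimensional, so the factor pulled out of the inner sum has to multiply coordinate-wise in $E$ without entangling the $h$-index. In parallel, the $1/\sqrt{H}$ arising from differentiating $f$ and the $\sqrt{H}$ in $\hat{M} = \sqrt{H}M$ must combine cleanly with the implicit $H$-scaling of $v_{t'}$ so that the kernel expression stays finite in the limit. Once these conventions are fixed, no new ideas beyond those in the SGD derivation are needed: \eqref{ADAMkernel} is simply a coordinate-wise reweighting of \eqref{kernel} by the ADAM-derived coefficients $c_i$.
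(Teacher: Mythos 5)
Your proposal follows essentially the same route as the paper's own proof: unroll the zero-initialized $M$ as the sum of ADAM* updates, expand the momentum as $m_j=(1-\beta_1)\sum_{i\le j}\beta_1^{j-i}J_i$, swap the order of the double sum to isolate the coefficient $c_i$, substitute the explicit row gradient $J_{i\omega}=\tfrac{1}{\sqrt{H}}g_\omega(x_i)\,\partial L/\partial f$, and pull $\mathbb{E}_\omega[g_\omega(x)g_\omega(x_i)]=\Theta(x,x_i)$ out of the expectation, with the $\hat v_j$ recursion obtained separately from $\mathbb{E}_\omega[J_j^2]=\tfrac{1}{H}\Theta(x_j,x_j)(\partial L/\partial f)^2$ by induction. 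The one delicate point you correctly flag --- making the $\sqrt{H}$ from $\hat M=\sqrt{H}M$ and the $H$-scaling of $v_j$ versus $\hat v_j$ cancel cleanly --- is handled in the paper by the identity $\hat v = Hv$, and is worth writing out explicitly since it is the only place the bookkeeping can go wrong.
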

Note that \(c_i\) is an \(E\)-dimensional vector whose scale is independent of \(H\). Provided we have an \(O(1)\) way to compute \(\Theta\), this entire computation may be performed in \(O(t)\) time, even in the infinite limit. Just as for traditional NTKs, the computation is possible in both the infinite and the finite case.

But unlike traditional NTKs, this NTK captures the momentum and adaptive step size that make ADAM so effective. As we'll show, this leads to tangible improvements.

\section{Experimental Results}
While this work is intended primarily as a theoretical exploration, we provide some cursory experimental validation. We show experimentally that NTK models exhibit a performance gap even when compared against models with no feature learning, demonstrating that feature learning alone cannot explain the poor performance. We then show that using the ADAM* NTK from Definition \ref{adam*} closes this gap.

Our model is a six layer, decoder-only transformer model \cite{vaswani2017attention} with 16 attention heads and an embed dimension of 512, trained on C4 for 40k steps. We replace the MLPs in the model with one of a few possibilities.
\begin{itemize}
  \item The original MLP, unfrozen.
  \item An MLP of the same size with no feature learning (i.e. frozen first matrix).
  \item A (finite) NTK of the same width, using the standard construction.
  \item An NTK of the same width using the ADAM* construction (ours).
\end{itemize}

\begin{figure}[H]
  \centering
  \includegraphics[width=.65\linewidth]{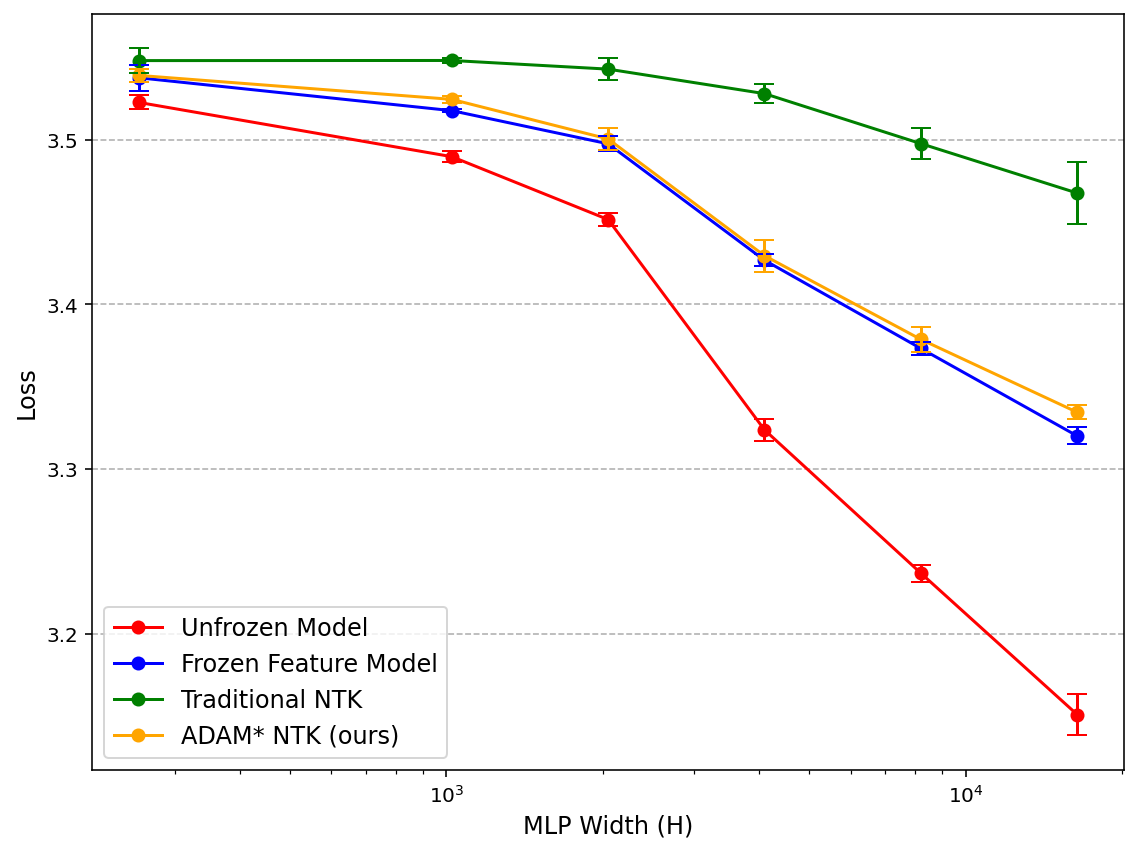}
  \caption{Final loss at various MLP widths for different models and NTKs.}
\end{figure}

Notice that the traditional NTK under-performs an ordinary MLP even when the MLP has no feature learning. This demonstrates that feature learning cannot be the root cause of the performance drop.

On the other hand, our ADAM* NTK has access to momentum and adaptive learning rate, and is able to closely track the performance of the frozen feature model across a wide variety of scales. In the infinite-width limit, our theoretical analysis suggests that the performance of both ADAM variants should ultimately converge with the unfrozen model, but we leave experimental exploration of this phenomenon to future work.

\bibliography{InfiniteWidthModelsThatWork}

\begin{thebibliography}{9}
\providecommand{\natexlab}[1]{#1}
\providecommand{\url}[1]{\texttt{#1}}
\expandafter\ifx\csname urlstyle\endcsname\relax
  \providecommand{\doi}[1]{doi: #1}\else
  \providecommand{\doi}{doi: \begingroup \urlstyle{rm}\Url}\fi

\bibitem[Jacot et~al.(2018)Jacot, Gabriel, and Hongler]{ntk}
Arthur Jacot, Franck Gabriel, and Cl{\'e}ment Hongler.
\newblock Neural tangent kernel: Convergence and generalization in neural
  networks.
\newblock \emph{Advances in neural information processing systems}, 31, 2018.

\bibitem[Yang and Hu(2020)]{no_feature_learning}
Greg Yang and Edward~J Hu.
\newblock Feature learning in infinite-width neural networks.
\newblock \emph{arXiv preprint arXiv:2011.14522}, 2020.

\bibitem[Rahimi and Recht(2008)]{kitchen_sink}
Ali Rahimi and Benjamin Recht.
\newblock Weighted sums of random kitchen sinks: Replacing minimization with
  randomization in learning.
\newblock \emph{Advances in neural information processing systems}, 21, 2008.

\bibitem[Kingma and Ba(2014)]{adam}
Diederik~P Kingma and Jimmy Ba.
\newblock Adam: A method for stochastic optimization.
\newblock \emph{arXiv preprint arXiv:1412.6980}, 2014.

\bibitem[Bietti and Mairal(2019)]{bietti2019inductive}
Alberto Bietti and Julien Mairal.
\newblock On the inductive bias of neural tangent kernels.
\newblock \emph{Advances in Neural Information Processing Systems}, 32, 2019.

\bibitem[Daniely et~al.(2016)Daniely, Frostig, and Singer]{daniely2016toward}
Amit Daniely, Roy Frostig, and Yoram Singer.
\newblock Toward deeper understanding of neural networks: The power of
  initialization and a dual view on expressivity.
\newblock \emph{Advances in neural information processing systems}, 29, 2016.

\bibitem[Frankle and Carbin(2018)]{frankle2018lottery}
Jonathan Frankle and Michael Carbin.
\newblock The lottery ticket hypothesis: Finding sparse, trainable neural
  networks.
\newblock \emph{arXiv preprint arXiv:1803.03635}, 2018.

\bibitem[Pedlosky(2013)]{pedlosky2013geophysical}
Joseph Pedlosky.
\newblock \emph{Geophysical fluid dynamics}.
\newblock Springer Science \& Business Media, 2013.

\bibitem[Vaswani et~al.(2017)Vaswani, Shazeer, Parmar, Uszkoreit, Jones, Gomez,
  Kaiser, and Polosukhin]{vaswani2017attention}
Ashish Vaswani, Noam Shazeer, Niki Parmar, Jakob Uszkoreit, Llion Jones,
  Aidan~N Gomez, {\L}ukasz Kaiser, and Illia Polosukhin.
\newblock Attention is all you need.
\newblock \emph{Advances in neural information processing systems}, 30, 2017.

\end{thebibliography}


\appendix

\section{Appendix}

\begin{proof}[Proof of Theorem \ref{adam}]
We'll first prove that \(\hat{v}\) as defined in Theorem \ref{adam} is related to \(v\) in Definition \ref{adam*}. Specifically, we'll show that \(\hat{v} = Hv\).

Since \(J\) is the gradient of the loss with respect to \(M\), the value of row \(h\) at time \(t\) will be
\begin{equation*}
    J_{t\omega_h}=\frac{\partial L\left(f_M\theta\left(x_t\right),y_t\right)}{\partial M_h}=\frac{1}{\sqrt{H}}g_{\omega_h}\left(x_t\right)\frac{\partial L\left(f_M\theta\left(x_t\right),y_t\right)}{\partial f_{M\theta}\left(x_t\right)}.
\end{equation*}
It follows that
\begin{align*}
\mathop{\mathbb{E}}_{\omega \sim \Omega}\left[J_{t\omega}^2\right] &= \frac{1}{H}\mathop{\mathbb{E}}_{\omega \sim \Omega}\left[g_\omega\left(x_t\right)^2\left(\frac{\partial L\left(f_M\theta\left(x_t\right),y_t\right)}{\partial f_{M\theta}\left(x_t\right)}\right)^2\right]\\
&= \frac{1}{H}\mathop{\mathbb{E}}_{\omega \sim \Omega}\left[g_\omega\left(x_t\right)^2\right]\left(\frac{\partial L\left(f_M\theta\left(x_t\right),y_t\right)}{\partial f_{M\theta}\left(x_t\right)}\right)^2\\
&=\frac{1}{H}\Theta\left(x_t, x_t\right)\left(\frac{\partial L\left(f_M\theta\left(x_t\right),y_t\right)}{\partial f_{M\theta}\left(x_t\right)}\right)^2.
\end{align*}
The result then follows by induction on the definition.

We turn our attention to the momentum \(m\). Adding up all terms from \(0\) to \(j\),
\begin{align*}
m_j &= \left(1-\beta_1\right)\sum_{i=1}^j\beta_1^{j-i}J_i.
\end{align*}
\(M\) is zero-initialized, so it's equal to the sum of its updates, \(M = \sum_{j=1}^t u_j\). \(\hat{M}\) is then
\begin{align*}
\hat{M} &= \sqrt{H}M\\
&= \sqrt{H}\sum_{j=1}^t u_j\\
&=-\alpha \sqrt{H}\sum_{j=1}^t \frac{m_j}{1-\beta_1^j}\cdot\left(\frac{v_j}{1-\beta_2^j}\right)^{-1/2}\\
&=-\alpha \sum_{j=1}^t \frac{m_j}{1-\beta_1^j}\cdot\left(\frac{\hat{v}_j}{1-\beta_2^j}\right)^{-1/2}\\
&=-\alpha \sum_{j=1}^t \left(\left(1-\beta_1\right)\sum_{i=1}^j\beta_1^{j-i}J_i\right)\frac{1}{1-\beta_1^j}\cdot\left(\frac{\hat{v}_j}{1-\beta_2^j}\right)^{-1/2}\\
&=-\alpha \sum_{i=1}^tJ_i \sum_{j=i}^t \beta_1^{j-i}\frac{1-\beta_1}{1-\beta_1^j}\cdot\left(\frac{\hat{v}_j}{1-\beta_2^j}\right)^{-1/2}\\
&=-\alpha \sum_{i=1}^tJ_i c_i\\
\end{align*}
where in the second to last step we exchange the order of summation.

From here, the proof proceeds much as it did for the ordinary NTK. From \eqref{expectation} we have
\begin{align*}
    f_{M\theta}\left(x\right) &= \mathop{\mathbb{E}}_{\omega\sim\Omega}\left[g_{\omega}\left(x\right)\hat{M}_\omega\right]\\
    &= -\alpha \mathop{\mathbb{E}}_{\omega\sim\Omega}\left[g_{\omega}\left(x\right)\sum_{i=1}^tJ_{i\omega} c_i\right]\\
    &= -\alpha \sum_{i=1}^t\mathop{\mathbb{E}}_{\omega\sim\Omega}\left[g_{\omega}\left(x\right)J_{i\omega} c_i\right]\\
    &= -\alpha \sum_{i=1}^t\mathop{\mathbb{E}}_{\omega\sim\Omega}\left[g_{\omega}\left(x\right)g_\omega\left(x_i\right)\frac{\partial L\left(f_M\theta\left(x_i\right),y_i\right)}{\partial f_{M\theta}\left(x_i\right)} c_i\right]\\
    &= -\alpha \sum_{i=1}^t\mathop{\mathbb{E}}_{\omega\sim\Omega}\left[g_{\omega}\left(x\right)g_\omega\left(x_i\right)\right]\frac{\partial L\left(f_M\theta\left(x_i\right),y_i\right)}{\partial f_{M\theta}\left(x_i\right)} c_i\\
    &= -\alpha \sum_{i=1}^t\Theta\left(x, x_i\right)\frac{\partial L\left(f_M\theta\left(x_i\right),y_i\right)}{\partial f_{M\theta}\left(x_i\right)} c_i,\\
\end{align*}
as desired. Note that this is essentially the same development as before, except for the introduction of \(c_i\).

\end{proof}

\end{document}